\relax
\documentclass[letterpaper]{article} 
\usepackage{aaai22}  
\usepackage{times}  
\usepackage{helvet}  
\usepackage{courier}  
\usepackage[hyphens]{url}  
\usepackage{graphicx} 
\urlstyle{rm} 
\usepackage{natbib}  
\usepackage{caption} 
\DeclareCaptionStyle{ruled}{labelfont=normalfont,labelsep=colon,strut=off} 
\frenchspacing  
\setlength{\pdfpagewidth}{8.5in}  
\setlength{\pdfpageheight}{11in}  
%
\usepackage{algorithm}
\usepackage{algpseudocode}
\usepackage{textgreek}
\usepackage{blkarray}
\usepackage{booktabs}
\usepackage{amsmath}
\usepackage{amssymb}
\usepackage{amsthm}
\usepackage{tikz}
\newcommand*\circled[1]{\tikz[baseline=(char.base)]{
		\node[shape=circle,draw,inner sep=.1pt] (char) {#1};}}
\theoremstyle{plain}
\newtheorem{definition}{Definition}[section]   
\newtheorem{assumption}{Assumption}[section] 
\newtheorem{theorem}{Theorem}[section] 

%
\usepackage{newfloat}
\usepackage{listings}
\lstset{%
	basicstyle={\footnotesize\ttfamily},
	numbers=left,numberstyle=\footnotesize,xleftmargin=2em,
	aboveskip=0pt,belowskip=0pt,%
	showstringspaces=false,tabsize=2,breaklines=true}
\floatstyle{ruled}
\newfloat{listing}{tb}{lst}{}
\floatname{listing}{Listing}
%
%
\pdfinfo{
/Title (AAAI Press Formatting Instructions for Authors Using LaTeX -- A Guide)
/Author (AAAI Press Staff, Pater Patel Schneider, Sunil Issar, J. Scott Penberthy, George Ferguson, Hans Guesgen, Francisco Cruz, Marc Pujol-Gonzalez)
/TemplateVersion (2022.1)
}

\setcounter{secnumdepth}{1} 

%


\title{Covered Information Disentanglement:  Model Transparency via Unbiased Permutation Importance
}
\author{
   João Pereira\textsuperscript{\rm 1,2}, 
    Erik S.G. Stroes\textsuperscript{\rm 1},
    Aeilko H. Zwinderman\textsuperscript{\rm 1},
    Evgeni Levin\textsuperscript{\rm 1,2}
}
\affiliations{
    \textsuperscript{\rm 1}Amsterdam University Medical Center, \\Meibergdreef 9 1105 AZ, \\Amsterdam, The Netherlands \\
    \textsuperscript{\rm 2}Horaizon, Marshallaan 2 2625 GZ, Delft, The Netherlands \\
    \{j.p.belopereira, e.levin\}@amsterdamumc.nl
}

\usepackage{bibentry}

\begin{document}

\maketitle

\begin{abstract}
	Model transparency is a prerequisite in many domains and an increasingly popular area in machine learning research.
In the medical domain, for instance, unveiling the mechanisms behind a disease often has higher priority than the diagnostic itself since it might dictate or guide potential treatments and research directions. One of the most popular approaches to explain model global predictions is the \textit{permutation importance} where the performance on permuted data is benchmarked against the baseline. However, this method and other related approaches will undervalue the importance of a feature in the presence of covariates since these cover part of its provided information. To address this issue, we propose Covered Information Disentanglement (\textit{CID}), a method that considers all feature information overlap to correct the values provided by \textit{permutation importance}. We further show how to compute \textit{CID} efficiently when coupled with \textit{Markov random fields}.  We demonstrate its efficacy in adjusting \textit{permutation importance} first on a controlled toy dataset and discuss its effect on real-world medical data.
\end{abstract}

\section{Introduction}

Understanding the biological underpinnings of disease is at the core of medical research.  Model transparency and feature relevance are thus a top priority to discover new potential treatments or research directions.
One of the current most popular methods to explain local model predictions is \textit{SHAP} ~\cite{shap_reg,shap_2,shap_3}, a game-theoretic approach that considers the features as ``players" and measures their marginal contributions to all possible feature subset combinations. \textit{SHAP} has also been generalized in \textit{SAGE} \cite{SAGE} to compute global feature importance. However, recent work by Kumar et al. \cite{shap_prob} exposes some mathematical issues with \textit{SHAP} and concludes that this framework is ill-suited as a general solution to quantifying feature importance. 
Other local-based methods such as \textit{LIME} \cite{LIME} and its variants (see e.g. \cite{LIME2,LIME3,LORE,gse}) build weak yet explainable models on the neighborhood of each instance. While this achieves higher prediction transparency for each data point, in this work, we are mainly concerned with a more holistic view of importance, which may be more appropriate to guide new research directions and unravel disease mechanisms. 
Tree-based methods are very commonly selected for this purpose because they compute the impurity or  \textit{Gini importance}  \cite{rf}. The impurity importance is biased in favor of variables with many possible split points; i.e. categorical variables with many categories or continuous variables \cite{bias_rf}. 
A generally accepted alternative to computing the  \textit{Gini importance}  is the \textit{permutation importance} \cite{fisher}, which benchmarks the baseline performance against permuted data. There is, however, the issue of multicollinearity. When features are highly correlated, feature permutation will underestimate the individual importance of at least one of the features, since a great deal of the information provided by this feature is ``covered" by its covariates. One option is to permute correlated features together \cite{tolosi}. However, this implies choosing an arbitrary correlation grouping threshold or performing cross-validation to determine the optimal number of groups that yield the best estimator, resulting in slow running times. Most importantly, it leaves out the differentiation of their contributions to the final prediction. Motivated by the idea that there is an information overlap between different features, we develop Covered Information Disentanglement (\textit{CID}),\footnote{We make an implementation of \textit{CID} publicly available at: https://github.com/JBPereira/CID.} an information-theoretic approach to disentangle the shared information and scale the \textit{permutation importance} values accordingly. We demonstrate how \textit{CID} can recover the right importance ranking on artificial data and discuss its efficacy on the Cardiovascular Risk Prediction dataset \cite{ehj_proteomics}.

\section{Methodology}
\subsubsection*{Notation}
We denote matrices, 1-dimensional arrays, and scalars/functions with capital bold, bold, and regular text, respectively (e.g. $\mathbf{X},\; \mathbf{x},\; \alpha/f$). Given a dataset $\mathbf{X}_{M\times N}$, we will denote its random variables by capital regular text with a subscript and the values using lowercase (e.g. $X_i$ and $x_i$), while the joint density/mass will be represented as $p(x)$. The expected loss of a function given by: $\frac{1}{M}\sum_{i=1}^{M}l\left[y,\,f(\mathbf{x}_i)\right]$ will be denoted by $\mathcal{L}\left[f\left(\mathbf{X}\right)\right]$.

\subsection{Information Theory background}\label{section:it}
\textit{Information theory} (IT) is a useful tool used in quantifying relations between random variables. The basic building block in IT is the \textit{entropy} of an r.v. $X_i$, which is defined as: $H(X_i)\equiv -\sum_{x_i}p(x_i)\,\text{log}\, p(x_i).$
The \textit{joint entropy} between r.v.s $X_i$ and $X_j$ is defined as: $H(X_i, X_j)\equiv -\sum_{x_i}\sum_{x_j}p(x_i, x_j)\text{log}\, p(x_i,x_j).$
The \textit{mutual information} between r.v.s $X_i$ and $X_j$ is the relative entropy between the joint entropy and the product distribution $p(x_i)p(x_j)$: $I(X_i, X_j)\equiv \sum_{x_i}\sum_{x_j}p(x_i, x_j)\,\text{log}\, \frac{p(x_i,x_j)}{p(x_i)p(x_j)}.$
For a more thorough exposition to IT, the reader can refer to \cite{IT_book}.\\
Using the definitions above, one can derive properties that resemble those of set theory, where joint entropy and mutual information are the information-theoretic counterparts to union and intersection, respectively \cite{info_amount}.
In order to keep this intuition when generalizing to higher dimensions, one can define the entropy of the union of $N$ features as:
\begin{definition}{\textbf{Multivariate Union Entropy}}\label{union_entropy}
	\begin{equation*}
	H\left(\cup_{i=1}^{N}X_i\right)\equiv-\sum_{x_i}p(x_1,\,...,\,x_N)\text{log}\,p(x_1,\,...,\,x_N)
	\end{equation*}
\end{definition}
and using the Inclusion-Exclusion principle, we can define the intersection as:
\begin{definition}{\textbf{Multivariate Intersection Entropy}}\label{int_entropy}	
	\begin{align*}
	&H\left(\cap_{i=1}^{N}X_i\right)\equiv \sum_{x_1,...,x_N}p(x_1,\,...,x_N)h(x_1,\,...,x_N),\\
	&h(x_1,\,...,x_N)=\sum_{k=1}^{N}(-1)^{k-1}\mathop{\sum_{I\subseteq\{1,\,...,\,N\};}}_{|I|=k}h(x_{I_1},\,...\,,\,x_{I_{k}}),
	\end{align*}
where $h(\mathbf{x})=-log\,p(\mathbf{x})$ is the \underline{local entropy}.
\end{definition}
This definition of multivariate intersection is also called co-information and it may yield negative values. To see this, consider the case of three sets of r.v.s $X_i$, $X_I$, and $Y$ and suppose there is no correlation between $X_i$ and $X_I$. If we rewrite the mutual information expression into $I(X_i, Y)-I(X_i, Y|X_I)$, then this expression may become negative when the information provided by $X_i$ and $Y$ given a fixed value of $X_I$ is higher than that of $I(X_i, Y)$. This can happen for instance if $X_i$ has no correlation with $X_I$ but knowing $X_I$ introduces a correlation between the two (what is commonly known as `explaining away'). This motivated Williams and Beer to draw the distinction between redundant and synergistic information and propose \textit{partial information decomposition} (PID) \cite{beer}. Ince \cite{Ince} thoroughly analyzed the multivariate properties of PID directly applied to multivariate entropy and suggested to divide the individual terms in definition \ref{int_entropy}, so that positive local entropy terms correspond to redundant entropy, while the negative ones correspond to synergistic entropy.

\subsection{Permutation Feature Importance}
Feature importance is a subjective notion that may vary with application. 
Consider a supervised learning task where a model $f$ is trained/tested on dataset $\mathbf{X},\,\mathbf{y}$ and its performance is measured by a function $\mathcal{L}$. 
In this work, we will refer to feature importance as the extent to which a feature $X_i$ affects $\mathcal{L}[f(\mathbf{X})]$, on its own and through its interactions with $X_{\backslash\{i\}}$.
Permutation importance was first introduced by Breiman \cite{rf} in random forests as a way to  understand the interaction of variables that is providing the predictive accuracy. \\
Consider a dataset $\mathbf{X}_{M\times N}$ and denote the $j$th instance of the $i$th feature by $\mathbf{X}_{i}^{j}$.
Suppose the set $\{1,\,...,M\}$ is sampled and denote the subsample by $\mathbf{s},\,\mathbf{s}\subseteq\{1,\,...,M\}$. Consider further a random permutation of this subset which we denote by $\boldsymbol{\pi}\left(\mathbf{s}\right)$ and its $j$th element by $\boldsymbol{\pi}_j \left( \mathbf{s} \right )$. The \textit{permutation importance}, is given by:
\begin{alignat}{2}
&e_{i}(f, \mathbf{s})=\sum_{j\in \mathbf{s}}^{|\mathbf{s}|}
\Biggl ( &&
	\mathbf{E}_{\sim p(\boldsymbol{\pi})} 
	\left[
		\mathcal{L}
		\left(
			f\left(
			\mathbf{X}_{1}^{j},...,\mathbf{X}_{i}^{\boldsymbol{\pi}_j \left( \mathbf{s} \right ) },...,\mathbf{X}_{N}^{j}
		     \right) 
		\right)
	\right] 
	\nonumber\\& &&-\mathcal{L}
		\left(
		      f\left(
		      \mathbf{X}_{1}^{j},...,\mathbf{X}_{N}^{j}
		      \right) 
		\right) 
\Biggr )
\\
&e_{i}(f) = \mathbf{E}_{\sim p(\mathbf{s})}&&\left[e_{i}(f, \mathbf{s})\right] 
\end{alignat}

\subsection{Covered Information Disentanglement}

\begin{figure}[tb]
	\centering
	\includegraphics[width=0.6\linewidth]{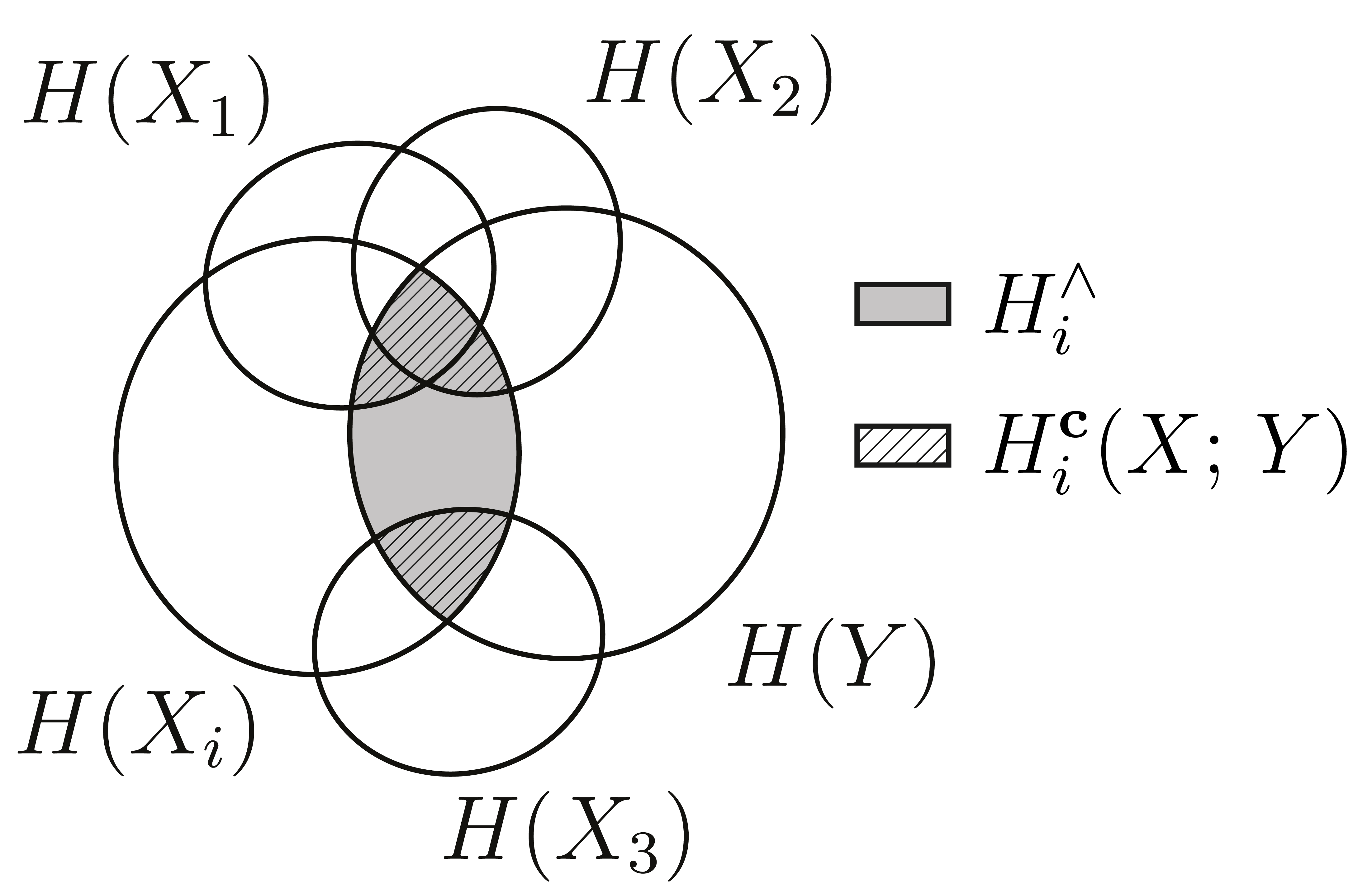}
	\caption{An illustration of the \textit{permutation importance} bias in the presence of covariates and the measures needed to correct it. The mutual information between random variable $X_i$ and $Y$ (represented in gray) is covered by the information provided by r.v.s $X_1$, $X_2$ and $X_3$. Permutation importance only measures the non-covered part (non-shaded gray), and to correct its value, we suggest computing $H_{i}^{\mathbf{c}}(X;\,Y)$. }
	\label{covered_info_pic}
\end{figure}

In the presence of covariates, the \textit{permutation importance} measures the performance dip caused by removing the non-mutual information between the feature and the remaining data. That is:
\begin{equation}\label{eq_target}
e_{i}\left(f\right)  = \mathcal{I}_i\left(f\right)-e_{i}^\mathcal{\cup}(f),
\end{equation}
where $\mathcal{I}_i\left(f\right)=\mathbf{E}_{\sim p(\mathbf{s})}\left[\mathcal{I}_i(f, \mathbf{s})\right]$ is the expected total importance of feature $i$ under model $f$ (the quantity we are interested in) and $e_i^\mathcal{\cup}\left(f\right)=\mathbf{E}_{\sim p(\mathbf{s})}\left[e_{i}^{\cup}(f, \mathbf{s})\right] $ is the expected performance dip covered by all other variables.
To compute $e_i^\mathcal{\cup}\left(f\right)$ would require applying the Inclusion-Exclusion principle and measuring the performance dip for all possible feature combinations of size $1$ to the number of features. Instead, we note that $e_i^\mathcal{\cup}\left(f\right)$ intuitively measures the model performance dip when the model is deprived of the information covered by the r.v.s that are correlated with $X_i$. For an intuitive depiction of the problem, see figure \ref{covered_info_pic}.\\
Motivated by the analogy between set-theory and information measures, we define the joint information between an r.v. and the target variable that is ``covered" by the other r.v.s as:
\begin{definition}\label{cov}
	{\textbf{Covered information (CI)}}
	Given an r.v. $X_i$ and a set of distinct r.v.s  $X_{\mathbf{i^-}},\,\mathbf{i^-}=\{1,\,...\,N\}\backslash \{i\}$, the information of $X_i$ w.r.t. $Y$ covered by $X_{\mathbf{i^-}}$  is defined as:
	\begin{equation*}
	H_{i}^{\mathbf{c}}(X;\,Y)=H\left(X_i\cap Y\cap\left\{\cup_{j\in \mathbf{i^-}}X_j\right\}\right).
	\end{equation*}
\end{definition}
When it is clear from the context what $Y$ and $X_{\mathbf{i^-}}$ are, we will abbreviate $H_{i}^{\mathbf{c}}(X;\,Y)$ into $H_{i}^{\mathbf{c}}$, denote the mutual information with $Y$ by $H_{i}^{\wedge}$, and the respective local entropy terms for the $k$th row in the dataset with $h_{ik}^{\mathbf{c}}\equiv h_i^{\mathbf{c}}(\mathbf{X}_i^k,Y^k)$ and $h_{ik}^{\wedge}\equiv h_i^{\wedge}(\mathbf{X}_i^k,\mathbf{y}^k)$.
We further divide $H_{i}^{\mathbf{c}}$ and $H_{i}^{\wedge}$ into its redundant and synergistic counterparts, which for a specific sample $\mathbf{s}$ are given by: 
\begin{align*}
&\textbf{Redundant MI}: H_{i}^{\wedge^+}(\mathbf{s}) = \frac{1}{|\mathbf{s}|}\sum_{k\in\mathbf{s}} \text{max}\left(0, h_{ik}^{\wedge}\right)\\
&\textbf{Synergistic MI}: H_{i}^{\wedge^-}(\mathbf{s}) = \frac{1}{|\mathbf{s}|}\sum_{k\in\mathbf{s}} \left|\text{min}\left(0, h_{ik}^{\wedge}\right)\right|\\
&\textbf{Redundant CI}: H_{i}^{\mathbf{c}^+}(\mathbf{s}) = \frac{1}{|\mathbf{s}|}\sum_{k\in\mathbf{s}} \text{max}\left(0, h_{ik}^{\mathbf{c}}\right)\\
&\textbf{Synergistic CI}: H_{i}^{\mathbf{c}^-}(\mathbf{s})  = \frac{1}{|\mathbf{s}|}\sum_{k\in\mathbf{s}} \left|\text{min}\left(0, h_{ik}^{\mathbf{c}}\right)\right|
\end{align*}
\begin{assumption}\label{core_assump}
	Permutation importance and entropy terms are related through a map $\phi_f:\mathbb{R}^4\rightarrow \mathbb{R}$, such that $e_{i}(f, \mathbf{s})=\phi_f\left(H_{i}^{\mathbf{c}^+}(\mathbf{s}), H_{i}^{\mathbf{c}^-}(\mathbf{s}),H_{i}^{\wedge^+}(\mathbf{s}),   H_{i}^{\wedge^-}(\mathbf{s})\right) + \epsilon$, where $\epsilon$ is an error term.
\end{assumption} 
Thus, if assumption \ref{core_assump} holds, we can use the information of $X_i$ w.r.t. $Y$ by $X_{\mathbf{i}^-}$  and approximate equation \ref{eq_target} with:
\begin{align}
e^\mathcal{\cup}_{i}(f,\mathbf{s})\approx  &\phi_f\left(0, H_{i}^{\mathbf{c}^-}(\mathbf{s}),H_{i}^{\wedge^+}(\mathbf{s}),   H_{i}^{\wedge^-}(\mathbf{s})\right) \;- \nonumber\\& \phi_f\left(H_{i}^{\mathbf{c}^+}(\mathbf{s}), H_{i}^{\mathbf{c}^-}(\mathbf{s}),H_{i}^{\wedge^+}(\mathbf{s}),   H_{i}^{\wedge^-}(\mathbf{s})\right).
\end{align}

This means we can approximate the result of permuting all possible combinations of features by computing only the single-feature permutation loss and the covered information of r.v. $X_i$ by all the others. Here, we are implicitly defining: $\mathcal{I}_i\left(f, \mathbf{s}\right)\equiv \phi_f\left(0, H_{i}^{\mathbf{c}^-}(\mathbf{s}),H_{i}^{\wedge^+}(\mathbf{s}),   H_{i}^{\wedge^-}(\mathbf{s})\right) $, and thus the true importance in the performance difference scale is given by mapping the entropy values when there is no redundant entropy to the space of performance differences. \\
Since we are predicting the feature importance using a map between entropy terms (which measure model-agnostic importance) and \textit{permutation importance} values, the end result depends only on how learnable is the model behavior w.r.t to entropy. Moreover, since the entropy values are computed for the different subsample sets $\mathbf{s}$, the overall importance variability is also estimated.\\
\begin{figure*}[tb]
	\centering
	\makebox[\textwidth][c]{
		\includegraphics[ width=.8\linewidth]{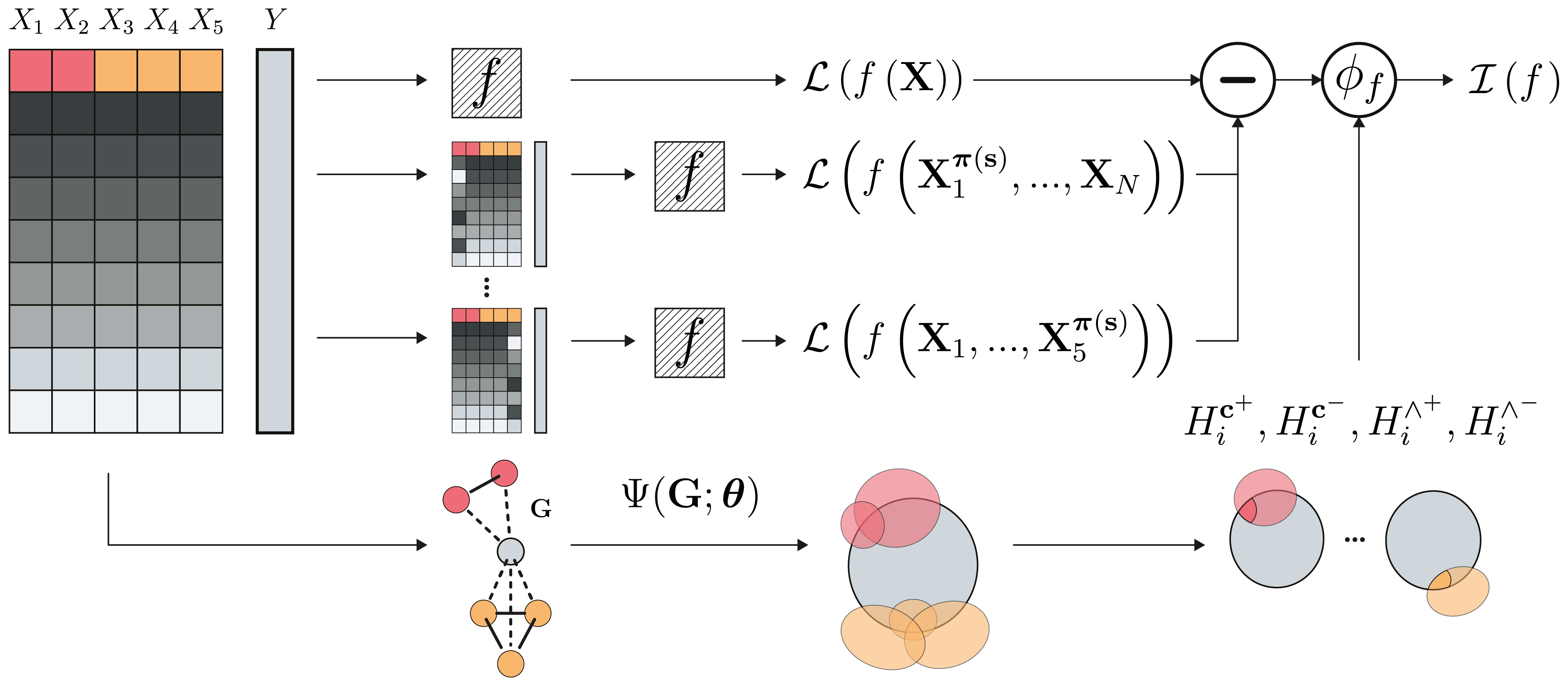}}
	\caption{\textit{CID} importance diagram. The permutation feature importance is computed by first calculating the expected loss of the model $f$ ($\mathcal{L}\left([f(\mathbf{X})\right)$). Then, each feature's values are permuted and the expected loss of $f$ computed. Subtracting each permuted dataset loss to the original one yields the \textit{permutation importance}. \textit{CID} starts by inferring the network $\mathbf{G}$ for the Markov Random Field $\Psi$ (alternatively, a prior network is given), then the MRF parameters $\boldsymbol{\theta}$ are inferred, and finally, $H_{i}^{\mathbf{c}}$/$H_i^{\wedge}$ are computed for each feature, which are then used to train the entropy/PI model $\phi_f$ and predict the true importance $\mathcal{I}(f)$.}
	\label{CID_flow}
\end{figure*}
There is still the issue of computing $H_{i}^{\mathbf{c}}$, since it involves computing $p(X)$. Since directionality is irrelevant for the purpose of computing overlapping information, we suggest to model $p(X)$ using an undirected graphical model (UGM).
Let $G=(V,\,E)$ denote a graph with $N$ nodes, corresponding to the $\{X_1,\,...,\,X_N\}$ features, and let $\mathcal{C}$ be a set of cliques (fully-connected subgraphs) of the graph $G$. Denoting a set of clique-potential functions by $\{\psi_{\mathcal{C}}:\,\mathcal{X}^{|\mathcal{C}|}\rightarrow \mathbb{R}\}$, the distribution of a Markov random field (MRF) \cite{pgm} is given by: $p(x)=\prod_{c\in \mathcal{C}}\psi_c(x_c)/ \mathbf{Z}$, where $\mathbf{Z}=\int \prod_{c\in \mathcal{C}}\psi_c(x_c) dx$ is the partition function. By the Hammersley-Clifford theorem, any distribution that can be represented in this way satisfies: $X_i\perp X_j|X_{\mathcal{N}(X_i)}$ for any $X_j\notin \mathcal{N}(X_i)$, where $\mathcal{N}(X_i)$ is the set $\{X_k:\, (i,k)\in E\}$. 
This allows to significantly simplify the expression of covered information yielding the main result of this paper:
\begin{theorem}\label{mrf_theorem} 
	Consider an r.v. $X_i$ and set of r.v.s $X_{\mathbf{i^-}},\;\mathbf{i^-}=\{1,\,...,N\}\backslash\{i\}$, a response r.v. $Y$, as well as the set of r.v.s that are neighbors to both $X_i$ and $Y$: $X_{\mathcal{N}(i,y)},\,\mathcal{N}(i,y)\in \cup\{\mathcal{N}(X_i), \,\mathcal{N}(Y)\}$.
	For a Markov Random Field, the covered information of $X_i$ by $X_{\mathbf{i^-}}$ w.r.t. $Y$ is given by:
	\begin{align}\label{main_eq}
	H_{i}^{\mathbf{c}} =H_{i}^{\wedge}-\mathbf{E}_{\sim p(x_{\mathcal{N}(i,y)})}\left[log\left(f\frac{\mathbf{d}^{T}\mathbf{F}\,\mathbf{e}}{\mathbf{d}^{T}\mathbf{F}_{y}\mathbf{F}_{x_i}^{T}\mathbf{e}}\right)\right],\nonumber
	\end{align}
	where $\mathbf{F}$ is a matrix with the product of joint potential values $\psi_{\mathcal{C}_F}$ for set of cliques $F:\,\{c\,|\,X_i, \,Y\in c\}$; $f$, $\mathbf{F}_y$ and $\mathbf{F}_{x_i}$ are an entry, column, and row of $\mathbf{F}$, respectively, while $\mathbf{d}$ and $\mathbf{e}$ are arrays with the product of potential values $\psi_{\mathcal{C}_D},\,\psi_{\mathcal{C}_E}$ for set of cliques $D:\,\{c\,|\,X_i\in c, \,Y\notin c\}$ and  $E:$ $\,\{c\,|\,X_i\notin c,\,Y\in c\}$ with fixed $X_{\mathbf{i}^-}$.
\end{theorem}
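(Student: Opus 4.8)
The plan is to first reduce the covered information to a conditional mutual information by inclusion--exclusion, and then to push the Markov factorization of $p(x)$ through the resulting ratio of conditional densities. Treating $X_i$, $Y$ and the composite variable $X_{\mathbf{i^-}}:=\cup_{j\in\mathbf{i^-}}X_j$ as the three ``sets'' of Definition~\ref{int_entropy} in the $N=3$ case (with $h(\cup_j x_j)=-\log p(x_{\mathbf{i^-}})$), Definition~\ref{cov} expands to
\[
H_i^{\mathbf c}=H(X_i)+H(Y)+H(X_{\mathbf{i^-}})-H(X_i,Y)-H(X_i,X_{\mathbf{i^-}})-H(Y,X_{\mathbf{i^-}})+H(X_i,Y,X_{\mathbf{i^-}}).
\]
I would group the first three terms into $H_i^{\wedge}=I(X_i;Y)$ and rewrite $H(X_{\mathbf{i^-}})-H(X_i,X_{\mathbf{i^-}})=-H(X_i\mid X_{\mathbf{i^-}})$ and $H(X_i,Y,X_{\mathbf{i^-}})-H(Y,X_{\mathbf{i^-}})=H(X_i\mid Y,X_{\mathbf{i^-}})$, which collapses the sum to the identity $H_i^{\mathbf c}=H_i^{\wedge}-I(X_i;Y\mid X_{\mathbf{i^-}})$, and pointwise to $h_{ik}^{\mathbf c}=h_{ik}^{\wedge}-\log\frac{p(x_i,y\mid x_{\mathbf{i^-}})}{p(x_i\mid x_{\mathbf{i^-}})\,p(y\mid x_{\mathbf{i^-}})}$. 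It then remains to express this conditional pointwise mutual information via the clique potentials.

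\noindent\textbf{Clique partition and the conditionals.}
I would partition the clique set $\mathcal C$ of the MRF over $\{X_1,\dots,X_N,Y\}$ into $F=\{c:X_i,Y\in c\}$, $D=\{c:X_i\in c,\,Y\notin c\}$, $E=\{c:X_i\notin c,\,Y\in c\}$ and $R=\{c:X_i,Y\notin c\}$, so $p(x)=\tfrac1Z\bigl(\prod_{F}\psi_c\bigr)\bigl(\prod_{D}\psi_c\bigr)\bigl(\prod_{E}\psi_c\bigr)\bigl(\prod_{R}\psi_c\bigr)$. Fixing $X_{\mathbf{i^-}}$ at its observed value, the $R$-product and $Z$ become constants, the $F$-product becomes a matrix $\mathbf F$ indexed by $(x_i,y)$, the $D$-product a vector $\mathbf d$ indexed by $x_i$, and the $E$-product a vector $\mathbf e$ indexed by $y$. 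Summing over $x_i,y$ to form $p(x_{\mathbf{i^-}})$ shows the constant factors cancel and the normaliser is exactly $\mathbf d^{T}\mathbf F\mathbf e$, so $p(x_i,y\mid x_{\mathbf{i^-}})=\frac{f\,\mathbf d[x_i]\,\mathbf e[y]}{\mathbf d^{T}\mathbf F\mathbf e}$ with $f=\mathbf F[x_i,y]$; marginalising over $y$ and over $x_i$ gives $p(x_i\mid x_{\mathbf{i^-}})=\frac{\mathbf d[x_i]\,\mathbf F_{x_i}^{T}\mathbf e}{\mathbf d^{T}\mathbf F\mathbf e}$ and $p(y\mid x_{\mathbf{i^-}})=\frac{\mathbf e[y]\,\mathbf d^{T}\mathbf F_{y}}{\mathbf d^{T}\mathbf F\mathbf e}$.

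\noindent\textbf{Ratio and Markov reduction.}
Forming the ratio, the factors $\mathbf d[x_i]$, $\mathbf e[y]$ and one copy of $\mathbf d^{T}\mathbf F\mathbf e$ cancel, leaving $\frac{p(x_i,y\mid x_{\mathbf{i^-}})}{p(x_i\mid x_{\mathbf{i^-}})\,p(y\mid x_{\mathbf{i^-}})}=f\,\frac{\mathbf d^{T}\mathbf F\mathbf e}{(\mathbf d^{T}\mathbf F_{y})(\mathbf F_{x_i}^{T}\mathbf e)}=f\,\frac{\mathbf d^{T}\mathbf F\mathbf e}{\mathbf d^{T}\mathbf F_{y}\mathbf F_{x_i}^{T}\mathbf e}$, where the last rewriting uses that $\mathbf F_{y}\mathbf F_{x_i}^{T}$ is a rank-one outer product. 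Since $\mathbf d,\mathbf e,\mathbf F$ depend on $X_{\mathbf{i^-}}$ only through the variables occurring in the $D$-, $E$- and $F$-cliques --- i.e.\ only through the joint Markov blanket $X_{\mathcal N(i,y)}$ furnished by the Hammersley--Clifford property --- conditioning on $X_{\mathbf{i^-}}$ may be replaced by conditioning on $X_{\mathcal N(i,y)}$; taking the expectation of the log of this ratio and substituting into $H_i^{\mathbf c}=H_i^{\wedge}-I(X_i;Y\mid X_{\mathbf{i^-}})$ yields the stated identity.

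\noindent\textbf{Main obstacle.}
I expect the delicate part to be the bookkeeping of the second step: tracking exactly which variables remain in $\mathbf F,\mathbf d,\mathbf e$ once $X_{\mathbf{i^-}}$ is fixed, verifying that the partition function together with the $R$-product cancels cleanly in \emph{each} of the three conditionals, and making explicit that shrinking the conditioning set from $X_{\mathbf{i^-}}$ to $X_{\mathcal N(i,y)}$ is precisely the conditional-independence guarantee of Hammersley--Clifford applied to the pair $\{X_i,Y\}$. The inclusion--exclusion reduction and the final algebraic cancellation (including the outer-product rewriting of the denominator) should be routine once the notation is fixed.
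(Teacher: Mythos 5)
Your proposal is correct and follows essentially the same route as the paper: the same inclusion--exclusion expansion of $H_i^{\mathbf c}$ into pairwise and triple union entropies, and the same partition of cliques by membership of $X_i$ and $Y$ to reduce the resulting ratio to $f\,\mathbf d^{T}\mathbf F\mathbf e\,/\,(\mathbf d^{T}\mathbf F_{y})(\mathbf F_{x_i}^{T}\mathbf e)$. The only (cosmetic) difference is that you explicitly collapse the intermediate terms to $H_i^{\wedge}-I(X_i;Y\mid X_{\mathbf{i^-}})$ and then compute the conditional pointwise mutual information, whereas the paper evaluates the entropy differences $\circled{1}-\circled{2}$ and $\circled{3}-\circled{4}$ directly from the potentials; both yield the identical cancellation.
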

\begin{proof}
	Using definition \ref{union_entropy}, \ref{int_entropy} and \ref{cov}:
	\begin{align*}
	H_{i}^{\mathbf{c}}=&  H_{i}^{\wedge}+ \overbrace{H(X_i\cup Y\cup X_{\mathbf{i^-}})}^{\circled{1}}-\overbrace{H(X_{\mathbf{i^-}}\cup Y)}^{\circled{2}}\\&+\overbrace{H(X_{\mathbf{i^-}})}^{\circled{3}}-\overbrace{H(X_i\cup X_{\mathbf{i^-}})}^{\circled{4}}.
	\end{align*}
	The probability density for Markov Random fields is equal to $p(x)=\prod_{c\in\mathcal{C}}\psi_c(x_c)/ \mathbf{Z}$, where $\mathbf{Z}$ is the partition function and $\mathcal{C}$ is the set of cliques in the Markov network. Define two sets of cliques: $A:\{c\,|\,X_i \in c\}$ and $B:\{c\,|\,X_i\notin c\}$. In that case (ignoring the partition function term because it cancels out):
	\begin{align}
	&\circled{1}=-\sum_{x}p(x)\left[log \prod_{b\in B}\psi_{b}(x_{b})+
	log\prod_{a \in A}\psi_{a}(x_{a}) \right] ,\nonumber
	\\
	&\circled{2}=-\sum_{x}p(x)\left[log \prod_{b \in B}\psi_{b}(x_{b}) + 
	log\sum_{x_i}\prod_{a\in A}\psi_{a}(x_{a}) \right ] ,\nonumber
	\\
	&\circled{1}-\circled{2}=-\sum_{x}p(x)log\left(\frac{\prod_{a\in A}\psi_{a}(x_{a})}{\sum_{x_i}\prod_{a\in A}\psi_{a}(x_{a})}\right).\nonumber
	\end{align}
	
	To compute $\circled{3}-\circled{4}$, define four sets of cliques: $C:\{c\,|\,X_i \notin c,\,Y \notin c\}$, $D:\{c\,|\,X_i \in c,\,Y \notin c\}$, $E:\{c\,|\,X_i \notin c,\,Y \in c\}$, and $F:\{c\,|\,X_i \in c,\,Y \in c\}$. In order to reduce the clutter, we will introduce the following functions: $d(x_i, x_{\mathbf{i^-}})=\prod_{j\in \mathbf{i^-}, j\sim i}\psi(x_i, x_j)$, $e(y, x_{\mathbf{i^-}})=\prod_{j\in \mathbf{i^-}, j\sim y}\psi(y, x_j)$, $f(x_i, y)=\psi(x_i, y)$, where we will abbreviate $d(x_i, x_\mathbf{i^-})$ into $d(x_i)$ and $e(y, x_\mathbf{i^-})$ into $e(y)$ when the value for random variable $X_\mathbf{i^-}$ is fixed. Then (again, ignoring the partition function):
	
	\begin{align}
	&\resizebox{1\linewidth}{!}{$\circled{3}=-\sum_{x}p(x)\left [ log \prod_{c\in C}\psi_{c}(x_{c}) +  log\sum_{x_i}\sum_{y} d(x_i)e(y)f(x_i, y) \right ]$},\nonumber
	\\
	&\resizebox{1\linewidth}{!}{$\circled{4}=-\sum_{x}p(x)\left[ log \prod_{c\in C}\psi_{c}(x_{c}) + log\sum_{y} d(x_i)e(y)f(x_i, y) \right]$},\nonumber\\
	&\circled{3}-\circled{4}=-\sum_{x}p(x)log\left(\frac{\sum_{x_i}\sum_{y} d(x_i)e(y)f(x_i, y)}{\sum_{y} d(x_i)e(y)f(x_i=X_i, y)}\right), \nonumber
	\end{align}
	where $f(x_i=X_i, y)$ is the function $f$ for a fixed value of the r.v. $X_i$.
	Since the set of cliques $A=\{D\cup F\}$, and denoting by $d(X_i)$, $f(X_i, Y)$ the functions $d$ and $f$ for fixed values of $X_i$ and $Y$, then:
	\begin{align}
	\begin{split}
	&(\circled{1}-\circled{2}) + (\circled{3}-\circled{4}) =\\ &\resizebox{1\linewidth}{!}{$-\sum_{x}p(x)log\left(\frac{\sum_{x_i}\sum_{y}d(X_i)d(x_i)f(X_i, Y)e(y)f(x_i, y)}{\sum_{x_i}\sum_{y}d(X_i)d(x_i)f(x_i, Y)e(y)f(X_i, y)}\right)$}\\
	&=-\mathbf{E}_{\sim p(x_{\mathcal{N}(i,y)})}\left[log\,f(X_i, Y) + log\,\left(\frac{\mathbf{d}^{T}\mathbf{F}\,\mathbf{e}}{\mathbf{d}^{T}\mathbf{F}_{y}\mathbf{F}_{x_i}\mathbf{e}}\right)\right]\nonumber, 
	\end{split}
	\end{align}
	where $x_{\mathcal{N}(i,y)}$ is an instance of the set of r.v.s that are neighbors to either $X_i$ or $Y$, $\mathbf{d}$ and $\mathbf{e}$ are column arrays with the different values of $d(x_i)$  and $e(y)$ for fixed $X_\mathbf{i^-}$, $\mathbf{F}$ is a matrix with all the values $f(x_i, y)$ with varying values of $X_i$ in the rows and $Y$ in the columns, and $\mathbf{F}_y$ and $\mathbf{F}_{x_i}$ are row and column vectors of $\mathbf{F}$ corresponding to fixed $Y$ and fixed $X_i$, respectively. This yields the result of the theorem.
\end{proof}

\subsubsection{Considerations and simplifications}
If a $2$-clique MRF is chosen, then $\mathbf{F}$ depends only on $X_i$ and $Y$, and can be computed before the expectation. \\
\textbf{Gaussian MRF:}
Learning an MRF's network structure is expensive. One popular approach is to use \textit{graphical lasso} \cite{graphical_lasso} which learns the entries of a Gaussian precision matrix by finding: $\underset{\boldsymbol{\Lambda} \in \mathbb{S}_{+}^{n}}{\text{min}}-log\;det(\boldsymbol{\Lambda}) + tr(\mathbf{S}\boldsymbol{\Lambda})+\rho||\boldsymbol{\Lambda}||_1$, where $\boldsymbol{\Lambda}$ is the precision matrix (constrained to belong to $\mathbb{S}_{+}^{n}$, the set of positive semi-definite $n\times n$ matrices), $\mathbf{S}$ is the empirical covariance matrix and $\rho$ acts in analogy to Lasso regularization by penalizing a large number of non-zero precision entries. We can model the potentials using Gaussian Markov random fields whose potentials are $\psi_{s, t}(x_s, x_t)=\text{exp}\left[-\frac{1}{2}x_s\Lambda_{st}x_t\right],\; \psi_{s}(x_s) = \text{exp}\left[-\frac{1}{2}\left(x_s^{2}\Lambda_{ss}+2\eta_sx_s \right)\right]$, where $\boldsymbol{\eta}=\mathbf{\Lambda}\boldsymbol{\mu}$ ($\boldsymbol{\mu}$ is the mean vector).\\ 
\textbf{Discrete Approximation:}
Continuous MRF such as Gaussian Markov Random fields depend on a continuous multivariate distribution and thus the entropy must be replaced by differential entropy, which violates many of the desired properties of discrete entropy. Therefore, we will approximate a continuous distribution with a discrete one $p(x_i)\approx\delta_i p(\overline{x_i})$, where $\delta_i$ is the $i$th feature bin size and $\overline{x_i}$ is the mean value of the bin, and then carry on with our computations as specified in theorem \ref{mrf_theorem}. For the case where all bins have the same size per feature, all the $\delta$s cancel out. \\
\textbf{Complexity:}
If we compute the expectation in theorem \ref{mrf_theorem} as the empirical expectation, then the asymptotic complexity becomes $\mathcal{O}(SB^2)$, where $S$ is the number of samples taken in the empirical expectation and $B$ is the maximum between the number of bins used to discretise continuous values and the maximum number of values the discrete features take.\\
The covered information computation for each feature can be done in parallel.
\section{Experimental Section}
To test the  \textit{CID} ranking adjustment, we first tested it on a toy dataset where the real importances are known, and a real-world medical dataset. We implemented \textit{CID} in Python using scikit-learn's \textit{graphical lasso} \cite{scikit}. 
For the toy dataset, we used scikit-learn's Extremely Randomized Trees and Bayesian Regression implementations, and for the medical dataset we used a Gradient Boosting Survival model \cite{sksurv}. 
\subsection{Multivariate Generated Data Test}\label{mvn_dataset}
In order to test if  \textit{CID} adjusts the permutation ranking into the correct one, we took $800$ samples from a multivariate distribution with the following marginal distributions:
$X_1\sim \text{Gamma(2,2)}$, $X_2\sim\text{Beta}(0.5,0.5)$, $X_3\sim X_1\cdot X_2$, $X_4\sim -\text{Exponential}(0.2)$, $X_5\sim \text{sin}(X_4)$ and $X_6\sim X_7\cdot X_8 + \left(1-X_7\right)\cdot X_9$ with $X_7\sim \text{Bin(1, 0.7)}$ and $X_8\sim \mathcal{N}(-5, 1)$, $X_9\sim \mathcal{N}(5, 1)$. We then defined the outcome variable as:
\begin{equation}
y=\begin{cases}
X_1, & \text{if}\; u\leq 0.15\\
X_2, & \text{if} \;0.15\leq u\leq 0.3\\
X_3, & \text{if} \;0.3\leq u\leq 0.5\\
X_1 + X_2+X_3, & \text{if} \;0.5\leq u\leq 0.65\\
X_4, & \text{if} \;0.65\leq u\leq 0.75\\
X_5, & \text{if}\; 0.75\leq u\leq 0.85\\
X_4+X_5, & \text{if} \;0.85\leq u\leq 0.95\\
X_6, & \text{if}\;u\geq 0.95\\
\end{cases}, 
\end{equation}
where $u$ is an observation of $U\sim \text{Uni}(0,1)$. The true importances are thus: $\mathcal{I}_3\geq\mathcal{I}_1=\mathcal{I}_2\geq \mathcal{I}_5=\mathcal{I}_4\geq\mathcal{I}_6$
We transformed the data into Gaussian using quantile information and chosen gaussian markov random fields to pair with \textit{CID}. The graph was inferred using graphical lasso with a grid-search cross-validation to determine the optimal $l_1$ penalization parameter. To test the  \textit{CID} correction, we performed $200$ Shuffle Splits with Extremely Randomized Trees and computed the  \textit{Gini importance} for each feature, as well as the \textit{permutation importance}. We then adjusted the feature importances using the  \textit{CID} algorithm and Bayesian Regression as $\phi$ (see assumption \ref{core_assump}). You can compare the rankings in figure \ref{MV_comparison_pic}.
As can be seen from the swarmplot in figure \ref{MV_comparison_pic}, with the exception of $X_3$, \textit{permutation importance} placed a nearly equal weight on all features, centered around zero, presumably due to the high feature covariance. The  \textit{CID} was able to rectify this ranking and ranked the features in the right order. It also placed every feature importance at non-zero with a clear gap between unequally important features and similar importance for $X_1$/$X_2$ and $X_4$/$X_5$, matching well the true importances. Moreover, notice how the \textit{Gini importance} underestimated $X_2$, presumably because $X_3$ and $X_1$ offer nearly as good partitions as $X_2$ due to their overlap and similarity.

\begin{figure}[tb]
	
	\centering
	\includegraphics[ width=.95\linewidth]{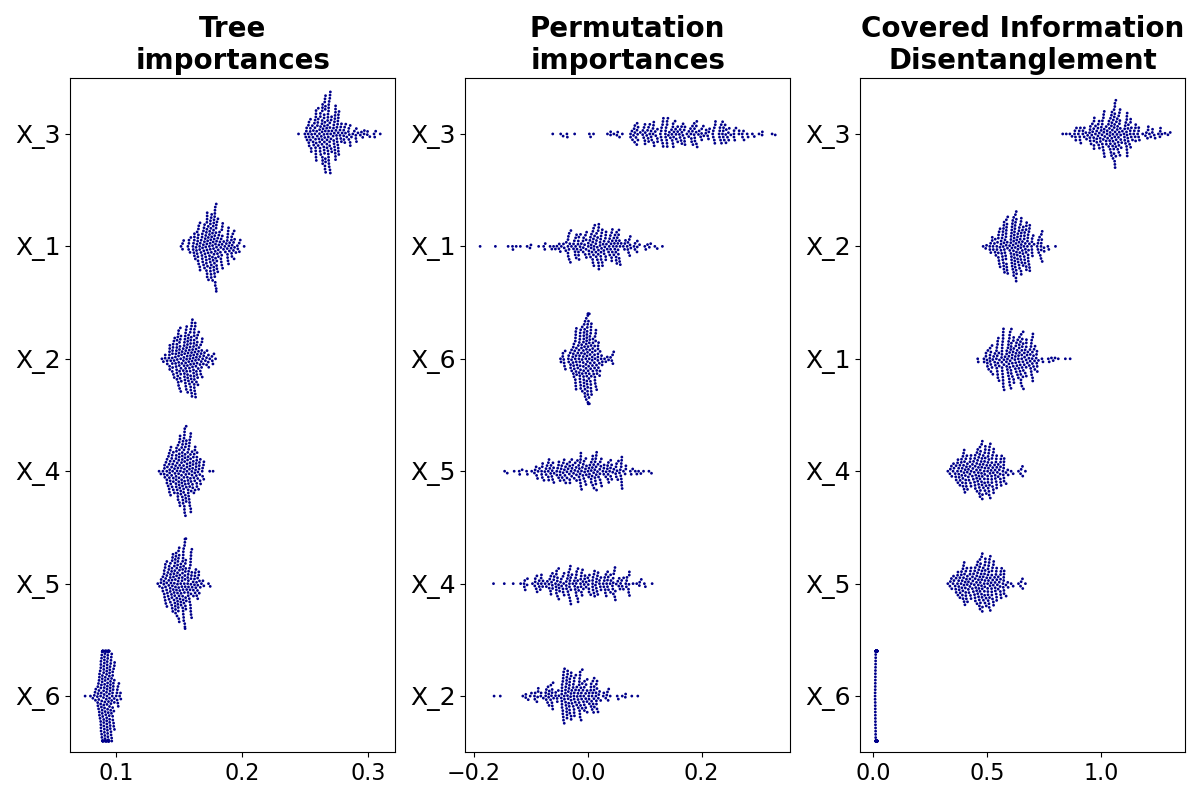}
	\caption{Comparison of the importance ranking on the multivariate gaussian dataset given by from left to right:  \textit{Tree importance}  ( \textit{Gini importance} ), \textit{\textit{permutation importance}},  \textit{CID} importance. The feature order is given by the importance median. The ground truth is $\mathcal{I}_3>\mathcal{I}_2=\mathcal{I}_1>\mathcal{I}_4=\mathcal{I}_5>\mathcal{I}_6$.}
	\label{MV_comparison_pic}
\end{figure}

\begin{algorithm}[h]
	\caption{\textit{CID} Importance}
	\textbf{Input:} $ \mathbf{X}_{M\times N}$, $\mathbf{y}$, $f$, $\Psi$, $\mathbf{G}$(optional)\\
	\textbf{Return:} $\mathcal{I}(f)$
	\begin{algorithmic}[1]
		\State $\mathbf{S}\gets$ SampleSubsets($\{1,...,M\}$)
		\State $\mathbf{e}(f)\gets$ PermutationImportance($\mathbf{X}$, $\mathbf{y}$, $f$, $\mathbf{S}$)
		
		\State $\mathbf{G}\gets$ InferGraph($[\mathbf{X},\,\mathbf{y}]$) \hfill \Comment{Infer graph if not povided}
		
		\State $\Psi_{\theta}\gets$InferMRFParams($\Psi$, $\mathbf{X}$, $\mathbf{y}$)
		\State $\mathbf{\mathbf{H}^{\wedge}}\gets$ ComputeMutualInfo($\mathbf{X}$, $\mathbf{y}$), $\mathbf{H^{\mathbf{c}}}\gets \mathbf{0}$ 
		\State $\mathcal{N}\gets$ GetNeighbors($[\mathbf{X}, y]$, $\mathbf{G}$)
		\For{$i$ \textbf{in} $[1,\,\dots,\,N]$}\Comment{can be parallelized}
		\For{$j$ \textbf{in} $[1,\,\dots,\,M]$}
		
		\State $\mathbf{d},\mathbf{e},\mathbf{F}\gets$ Potentials($\Psi_{\theta}$, $\mathbf{X},\mathbf{y}$, $i,j, \mathcal{N}_i, \mathcal{N}_y$)
		\State  $\mathbf{H}^{\mathbf{c}}_i\left[j\right] \gets$ $\mathbf{H}^{\wedge}_i\left[j\right]-log\left(f\frac{\mathbf{d}^{T}\mathbf{F}\,\mathbf{e}}{\mathbf{d}^{T}\mathbf{F}_{y}\mathbf{F}_{x_i}\mathbf{e}}\right)$
		\EndFor
		\EndFor
		\State $H^{\mathbf{c}^+}, H^{\mathbf{c}^-},H^{\wedge^+},   H^{\wedge^-}\gets$ RedundSyn($\mathbf{\mathbf{H}^{\wedge}}, \mathbf{H^{\mathbf{c}}}, \mathbf{S}$)
		\State $\phi\gets$ FitEntropyPI$\left(H^{\mathbf{c}^+}, H^{\mathbf{c}^-},H^{\wedge^+},   H^{\wedge^-}, \mathbf{e}(f)\right)$
		\State $\mathcal{I}(f)\gets \mathbf{E}_{\sim p(\mathbf{s})}\left[\phi\left(0, H^{\mathbf{c}^-},H^{\wedge^+},   H^{\wedge^-}\right)\right]$
	\end{algorithmic}
\end{algorithm}
\subsection{Cardiovascular Event Prediction with Proteomics}

\subsubsection{Problem Introduction}

Cardiovascular diseases (CVDs) are the number one cause of death globally.
Identifying asymptomatic people with the highest cardiovascular (CV) risk remains a crucial challenge in preventing their first cardiac event. Clinically used risk algorithms offer limited accuracy \cite{Piepoli}. Consequently, a substantial proportion of the general population at risk remains unidentified until their first clinical event. Hoogeveen and Belo Pereira et al. recently demonstrated increased efficacy in predicting primary events using protein-based models \cite{ehj_proteomics}. Since technical advances now allow for cheap and reproducible high-throughput proteomic analysis \cite{pea}, the field is prime for identifying new diagnostic markers or therapeutic targets, as well as developing new targeted protein panels to quickly and cheaply assess the risk of various diseases. 
The success of this endeavour is, of course, dependent on reliable feature importance identification. \\
The reason this dataset is a good candidate to test \textit{CID}, is the "biological robustness" of living systems \cite{bio_robust,cel_robust}. Biological robustness describes a property of living systems whereby specific functions of the system are maintained despite external and internal perturbations. In proteomics, robustness is achieved in two ways: since protein structure is intimately related to function \cite{protein_structure}, proteins with similar structure can exhibit similar functions, and proteins can be synthesized through different pathways in the metabolic network. This means two proteins located upstream the network relative to a third causing disease will have redundant information, and so do two proteins whose structure is similar (this is depicted in figure \ref{robustness_CID}).
\begin{figure}[tb]
	\centering
		\includegraphics[ width=1\linewidth]{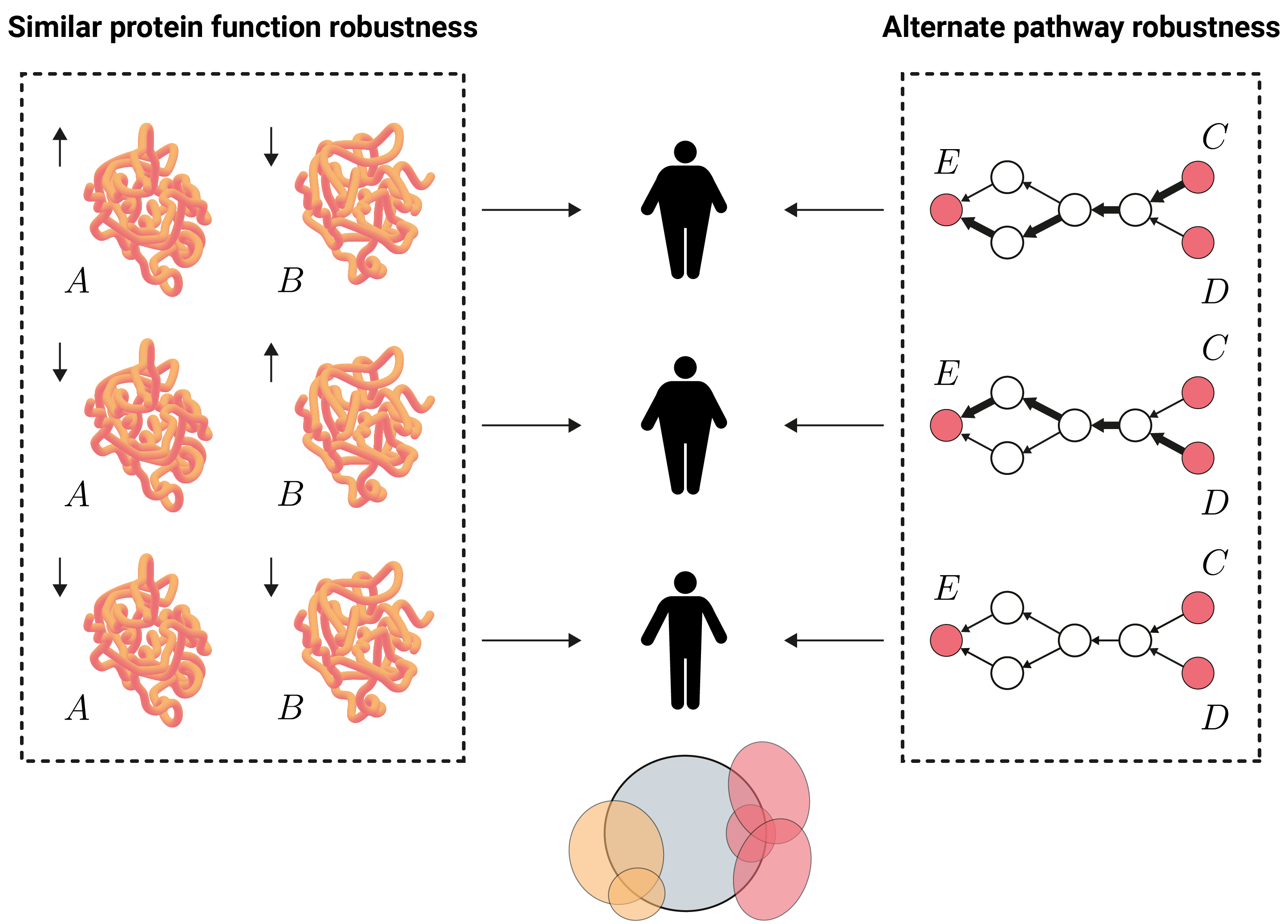}
	\caption{Illustration of biological robustness for the event prediction with proteomics problem. On the left square, it is shown how the levels of two different proteins with similar structure (and hence, similar function) impact the outcome (obesity); on the right square, it is shown how two different proteins can influence the levels of a third outcome-related one through different pathways in the metabolic network; on the bottom, there is a Venn diagram representing the information overlap of the outcome (in gray) and the other proteins considered.  }
	\label{robustness_CID}
\end{figure}

\subsection{Dataset Description}

The dataset consists of a selection of 822 seemingly healthy individuals in a nested case-control sample from the EPIC-Norfolk study \cite{epic}. Seemingly healthy individuals were defined as study participants who did not report a history of CV disease. A total of 411 individuals who developed an acute myocardial infarction (either hospitalization or death) between baseline and follow-up through 2016 were selected, together with 411 seemingly healthy individuals who remained free of any CV disease during follow-up. In the original study, the authors demonstrate how predicting short-term events leads to a significant accuracy improvement \cite{ehj_proteomics}, presumably because the proteomic profile will change over time. We used the early-event prediction dataset, where we only included patients who suffered from an event earlier than 1500 days from measurement (total of 100 patients). We do not make the code for this analysis available due to data confidentiality.

\subsection{Importance Ranking Experiment Details}

To evaluate the models' performance on days-to-event regression, we performed $100$ shuffle splits and measured the mean square error on the test set. We used 5-fold cross-validation to select the optimal hyper-parameters of a Survival Gradient Boosting regressor \cite{sksurv}. To prevent overfitting, we pre-selected 50 proteins using univariate selection. We then compared the CID with  permutation importance,  \textit{Univariate importance} ,  \textit{SAGE} \cite{SAGE}, and  \textit{Tree importance}  (\textit{Gini importance}).
We used GraphicalLasso (GL) for network inference in all our experiments and selected the $l_1$ regularization term using grid-search cross-validation. For the cardiovascular event survival analysis, we discretized the data into $10$ bins.
For this experiment we used:
\begin{alignat*}{2}
&e_i(f, \mathbf{s})&&=\phi_f\left(H_{X_i}^{\mathbf{c}^+}(\mathbf{s}), H_{X_i}^{\mathbf{c}^-}(\mathbf{s}),H_{i}^{\wedge^+}(\mathbf{s}),   H_{i}^{\wedge^-}\mathbf{s}\right) \\& &&= \mathcal{I}_i(f,(\mathbf{s}))g\left(H_{X_i}^{\mathbf{c}^+}(\mathbf{s})\right)\left(1-\frac{H_{X_i}^{\mathbf{c}^+}(\mathbf{s})}{H_{i}^{\wedge^+}(\mathbf{s})}\right),\\
&g\left(H_{X_i}^{\mathbf{c}^+}(\mathbf{s})\right)&&=\begin{cases}
c,& \text{if } H_{X_i}^{\mathbf{c}^+}(\mathbf{s})> 0,\;\;c\in\left[1, +\infty \right[\\
1,              & \text{otherwise} 
\end{cases}\;\;
\end{alignat*}
and found $c$ using grid-search on the values: $1/c=\left[ 1.2,1.4,1.6,1.8,2,2.2,2.4 \right]$. We removed data instances that contained values exceeding $4$ times the standard deviation to achieve better discretization. 
\begin{figure*}[tb]
	\centering\makebox[\textwidth][c]{
		\includegraphics[ width=.75\linewidth]{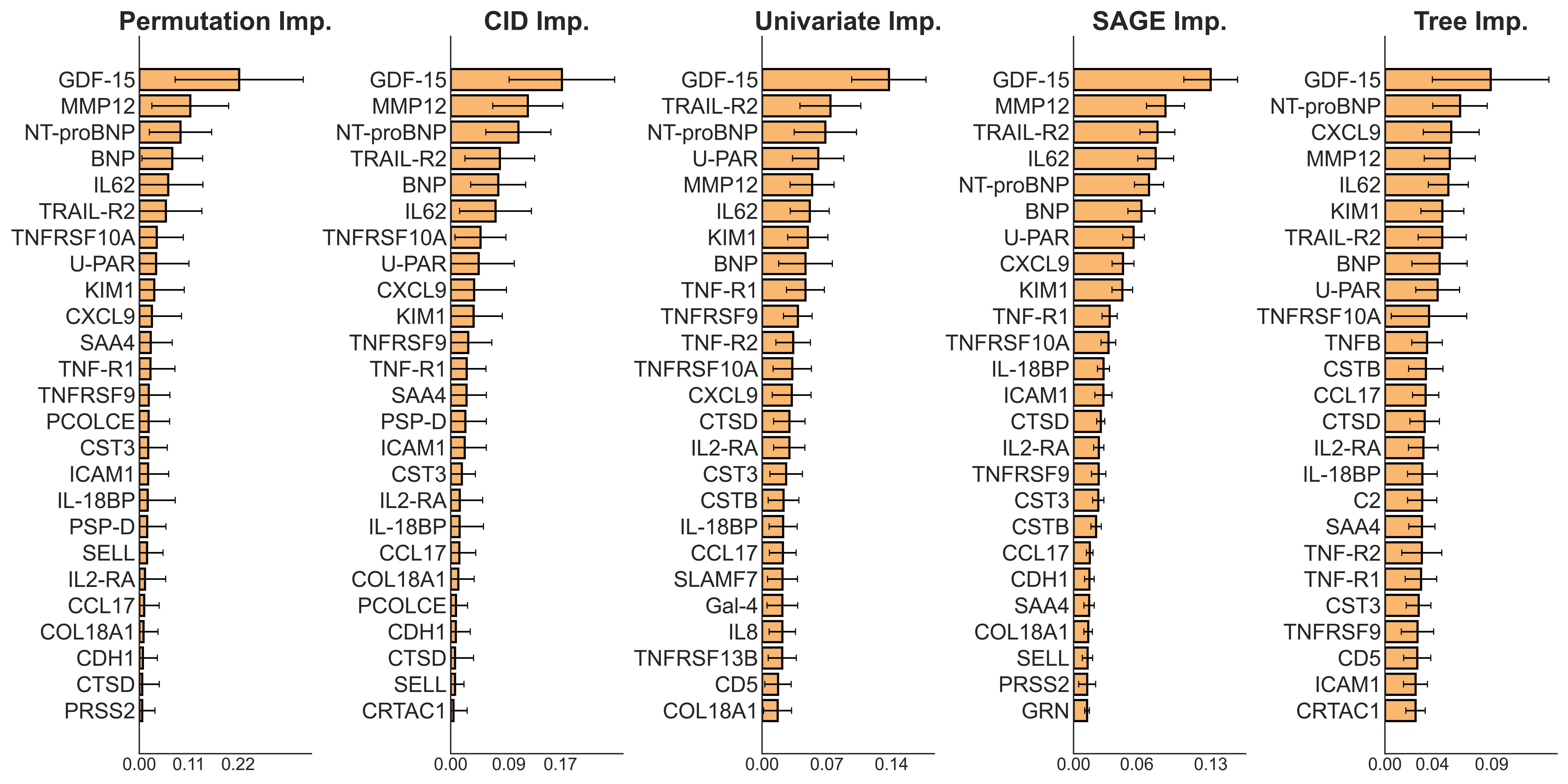}}
	\caption{Importance rankings for cardiovascular event prediction using proteomics given by \textit{permutation importance}, \textit{CID},   \textit{univariate importance}, \textit{SAGE} and  \textit{tree importance}  ( \textit{Gini importance} )}
	\label{epic_comparison}
\end{figure*}

\subsubsection{Results} 

Overall, \textit{CID} spreads the importance more evenly than Perm. imp. and aligns better with the Univariate ranking. Thus, this corroborates the hypothesis that Perm. imp. underrates correlated features. CID ranked TRAIL-R2, PSP-D, and IL2-RA significantly higher, while it ranked SELL and PCOLCE significantly lower. \\
\textbf{Gold Standard:} To establish a gold-standard analysis of the ranking, we asked world-renowned cardiovascular experts who commented on the comparison. TRAIL-R2 and GDF-15 were identified as the highest predictors of long-term mortality in patients with acute myocardial infarction in \cite{skau}. PSP-D has been identified as a strong clinical predictor of future adverse clinical outcome in stable patients with chronic heart failure in \cite{Brankovic}. Il2-RA has been positively associated with all-cause mortality, CVD mortality, incident CVD, stroke, and heart failure in \cite{durda}. To date, SELL and PCOLCE have not been associated as major players in the development of cardiovascular disease. \\
\textbf{Quantitative measure:} In order to establish a quantitative measure of the ranking quality, we followed an approach similar to what is described in \cite{SAGE}, where multiple subsets of the data were selected, the models were re-trained for each subset and then for each subset and importance method we measured the correlation between the performance and the subset's sum of importances. We also computed the model performance when trained on the top 10 to 35 proteins of each method. We also report the average running time per cycle conducted on an 8-core Intel(R) Core(TM) i7-7700HQ CPU @ 2.81Ghz. The results are displayed in table \ref{table_corr} which shows \textit{CID} outperformed the other methods on this dataset. 

\begin{table}
	\centering
	\scalebox{0.85}{
	\begin{tabular}{llll}
		\hline
		\centering Method  & Corr. &  MSE top feats &   Avg. cycle time(s)\\
		\specialrule{.1em}{.05em}{.05em} 
		Perm. Imp.       &  0.8697 &0.2824 $\pm$ 0.0107 &  0.7756  $\pm$ 0.2173 \\
		CID       & \textbf{0.8787} & \textbf{0.2801} $\pm$ 0.0098  & 18.76   $\pm$ 7.7256 \\
		Univar. Imp.   & 0.8185 & 0.2947 $\pm$ 0.0090& 0.0008  $\pm$ 0.0003    \\
		\textit{SAGE}  & 0.8499 & 0.2858$\pm$ 0.0064& 42179 $\pm$ 3835        \\
		Tree imp.   & 0.7219 & 0.2900$\pm$ 0.0087& -     \\
		\hline\\
	\end{tabular}
}
	\caption{Correlation between subset model performance and the subset's sum of importances for each method (higher is better) and the mean squared error on top 10 to 35 features for each method (lower is better), as well as the average running time per cycle in seconds.}
	\label{table_corr}
\end{table}

\section{Discussion and Conclusion}
Feature permutation is a popular algorithm used to equip black-box models with global explanations. It has the advantage of being easy to understand, but its validity suffers in the presence of covariates. We propose a novel method (CID) to disentangle the shared information between covariates and show how using Markov random fields leads to tractability, making \textit{permutation importance} competitive against methodologies where all marginal contributions of a feature are considered, such as SHAP. Due to network inference's complexity, we have only explored \textit{graphical lasso} in conjunction with Gaussian Markov random fields. Although this particular implementation is attractive for its scalability and intuitiveness, it might lack sufficient expressive power to model more complex relationships between features. \\
Recently, A. Fisher proposed \textit{model class reliance} (MCR), a method to estimate the range of variable importance for a pre-specified model class and shown how it can be computed as a series of convex optimization problems for model classes whose empirical loss is a convex optimization problem, although general computation procedures are still an open area of research \cite{fisher}. By learning a map between \textit{permutation importance} and entropy terms, the importances retrieved by \textit{CID} are less dependent on the specific fitted model than \textit{permutation importance} or SHAP, but the map quality still relies on a consistent model behavior with regards to redundant entropy, as well as a good MRF approximation to the data distribution. The former might depend on the groups of features and thus future work includes modeling this map using graph methods on the inferred network, where the node features are the entropy terms. The latter could be improved by using a class of non-parametric MRFs with higher flexibility. Should these two problems be solved, the \textit{CID} should be able to provide a truly model-agnostic feature importance method while retaining the intuitiveness of \textit{permutation importance}.
\section{Ethical Statement}

With an increasing reliance on using machine learning methods to research impactful domains such as biology and medicine, it is more important than ever to achieve model transparency and accurately determine feature relevance. In this work, we develop an efficient way to incorporate interactions when ranking variables. In the biomedical domain with thousands or millions of complex interactions among proteins, metabolites, genes, and so on, speed and correctness in determining the elements governing a given process are critical because they could significantly mitigate time, resources, and human lives lost. On the other hand, model transparency can also be exploited to develop adversarial examples or gain unwarranted access to protected systems/data.
\bibliography{CID_bib}

\end{document}